\DeclareMathOperator*{\bigtimes}{\vartimes}
\newcommand{\indep}{\rotatebox[origin=c]{90}{$\models$}}
\newcommand{\Pp}{\operatorname{P}}
\newcommand{\bx}{\mathbf{x}}
\newcommand{\bX}{\mathbf{X}}
\newcommand{\bchi}{\bm{\mathcal{X}}}
\newcommand{\nchi}{\mathcal{X}}
\newcommand{\sign}{\operatorname{sign}}
\newtheorem{definition}{Definition}
\newtheorem{example}{Example}
\newtheorem{lemma}{Lemma}
\newtheorem{remark}{Remark}
\newtheorem{theorem}{Theorem}
\newtheorem{corollary}{COrollary}
\date{}
\begin{document}

\title{Markov Property in Generative Classifiers}
\author{Gherardo Varando \footnote{gherardo.varando@math.ku.dk}
 \and  Concha Bielza \and
   Pedro Larra\~naga \and
   Eva Riccomagno        }

\maketitle

\begin{abstract}
We show that, for  generative classifiers, conditional independence corresponds
to linear constraints for the induced discrimination functions. Discrimination
functions of undirected Markov network classifiers can thus be characterized by
sets of linear constraints.  These constraints are represented by a
second order finite difference operator over functions of categorical variables.
As an application we study the expressive power of generative classifiers under
the undirected Markov property and we  present a general method to combine
discriminative and generative classifiers. 
\end{abstract}

\section{Introduction}

Generative classifiers are a wide class of machine learning models that consist
in estimating the joint probability distributions over the predictor and class
variables. From the estimated distribution a decision can be made over the class
variable given the values of the predictors.

Algebraic and geometric methods can be  valuable tools in dealing with discrete
probabilities as graphical models~\citep{GARCIA2005, Settimi1998}, contingency
tables and exponential families~\citep{Diaconis95, fienberg70}. \cite{varando15}
have studied the decision functions induced by a large class of generative
classifiers based on Bayesian networks, extending the results of
\cite{Minsky61stepstoward, peot} and \cite{Jaeger03}. \cite{ling2003} have
described the complexity of Bayesian network classifiers linking the graph
structure with the maximum order of the XORs that are representable by the
corresponding classifier.

In this article we develop a framework to study generative binary classifiers,
over categorical predictors, under conditional independences.  In
Section~\ref{sec:generative} we present generative classifiers together with
some basic notation and definitions. We study the implications of conditional
independence statements for the induced decision of generative classifiers in
Section~\ref{sec:markov}. In particular, in Section~\ref{sec:diffoper}, we
define a difference operator acting on discrimination functions and in
Section~\ref{sec:condindep} we prove that conditional independence statements,
for generative classifiers, are equivalent to a particular second order
difference being equal to zero for the induced discrimination function. In
Section~\ref{sec:markovclassifier} we extend the equivalence to generative
classifiers under undirected Markov assumptions.  Section~\ref{sec:complexity}
considers the  complexity of generative classifiers extending the results of
\cite{ling2003} and \cite{varando15} to Markov network.  In
Section~\ref{sec:learning} we suggest a simple way to combine generative and
discriminative approaches finding the maximum-likelihood estimation of the
parameters for generative classifiers with a  given discrimination function and
under undirected Markov assumptions. In Section~\ref{sec:conclusions} we
summarize the main result and address  some possible future lines of research.

\section{Generative Classifiers}
\label{sec:generative}

The classical problem of binary supervised classification in machine learning
amounts to \textit{learning} the relationship between some random variables,
called predictors or features, and a binary random variable called class.  Let
$C$ be the class variable assuming values in $\{-1,+1\}$ and $X_1,\ldots,X_n$ the 
$n$ predictor variables.

Let $\mathbf{X} = (X_1,\ldots,X_n)$ be the vector of predictors and
$\mathbf{x}=(x_1,\ldots,x_n)$ a vector of values of the features. Let $[n]$ be
the set of the first $n$ positive integer {numbers} $\{1,\ldots,n\}$.  Given $A
\subseteq [n]$, let $\mathbf{X}_A=(X_i)_{i\in A}$ indicate the vector of
{predictors indexed by elements } in $A$ and similarly
$\mathbf{x}_{A}=(x_i)_{i\in A}$. Conversely let $\bX_{-A}$ be the vector with
components not in $A$, $(X_i)_{i\in [n]\setminus A}$, and similarly for
$\bx_{-A}$.   {If $A=[n]$, $\bX_{-A}$ is the empty string}. With $|A|$ we will
denote the cardinality of  $A$ and with $-A = [n]-A$ the complementary
set.
For $i \in [n]$ let $X_i$ take values in a finite set, $\nchi_i$.  Let  $\bchi =
\bigtimes_{i \in [n]} \nchi_i$ denote the  sample space of the random
vector $\mathbf{X}$ and  $\bchi_A= \bigtimes_{i \in A} \nchi_i$ the {sample
space} of the subvector $(X_i)_{i\in A}$.  

If $\Pp$  is a joint probability, let $p$  denote  the
corresponding density, for example $p(\bx,c) = \Pp(\bX=\bx, C=c)$ is the joint
density of $(\bX,C)$ the random vector including the predictor and class
variables.  

Following classical literature on machine learning~\citep{ptpr} let a binary
classifier over predictors $\bX$ be a function $\phi: \bchi \to \{-1,+1\}$.

In this work we focus on generative classifiers where the classification
function is constructed from a joint probability distribution.
\begin{definition} \label{def:genclassifier}

A generative (probabilistic) binary classifier over predictor variables $\bX$ is
a strictly positive probability distribution $\Pp$ over $\boldsymbol{\nchi}
\times \{-1,+1\}$.  The induced binary classifier (or induced decision function)
of $\Pp$ is defined as the  most probable a posteriori
class,

$$\phi_{\Pp}(\bx)= \arg \max_{c \in \{-1,+1\}} \Pp(C=c | \bX = \bx ) =
\arg \max_{c \in \{-1,+1\}} \Pp(C=c , \bX = \bx )  .$$

\end{definition}

Examples of classical  generative classifiers are  the naive
Bayes~\citep{Minsky61stepstoward}, averaged one-dependence
estimators~\cite{Webb2005} and in general Bayesian network
classifiers~\citep{bielza2014}.  

The strictly positive assumption on generative classifiers permits defining a
real-valued discrimination function that sign-represents the binary classifier
$\phi_{\Pp}$ induced by the  generative classifier $\Pp$. 

\begin{definition} \label{def:discrimination}
The induced discrimination function $f_P$ of a generative classifier $\Pp$ is defined as

\begin{align*}
f_{\operatorname{P}}(\bx)=&
\ln\left(\frac{\operatorname{P}(\bX=\bx,C=+1)}{\operatorname{P}(\bX=\bx,C=-1)
}\right)   \\
=&\ln\left(\frac{\operatorname{P}(C=+1|\bX=\bx)}{\operatorname{P}(C=-1|\bX=\bx)}\right).
\end{align*}

\end{definition}

Let $\mathcal{P}$ be the set of all generative classifiers over predictors $\bX$
and $\mathcal{F}$ the set of functions $f:\bchi \to \mathbb{R}$.  For every
$f\in \mathcal{F}$, $\sign(f)$ is called the decision or  binary classifier
induced by $f$.  Let $\Phi$ be the mapping from $\mathcal{P}$ to $\mathcal{F}$
that assigns the induced discrimination function to a generative classifier
$\Pp$, that is, $\Phi(\Pp)=f_{\Pp}$.  For $f \in \mathcal{F}$, the level set
(fiber) $\Phi^{-1}(f)$ is the set of generative classifiers that induce $f$,
that is, the set of strictly positive probabilities $\Pp$ such that $f_{\Pp}=f$
(Figure~\ref{fig:map}). 

For every $A \subseteq [n]$ the set of functions
that depend only on the variables $\bx_A$ is indicated as,
$$\mathcal{F}_{A} = \left\{ f \in
\mathcal{F} \text{ s.t. } f(\bx_A,\bx_{-A}) = f(\bx_A, \bx'_{-A}) \text{ for all
} \bx \in \bchi,\, \bx'_{-A} \in \bchi_{-A} \right\}.$$
For $g \in
\mathcal{F}_{A}$ we can simply write $g(\bx_A)$ for $g(\bx_A,\bx_{-A})$. Obviously,
for every $A \subseteq [n]$, $\mathcal{F}_{A}$ is a linear subspace of
$\mathcal{F}$ of dimension
$|\bchi_A|$ and $\mathcal{F}_A \cap \mathcal{F}_B = \mathcal{F}_{A \cap B}$.
The space of constant functions, that
is, functions that do not depend on any variable, is indicated as
$\mathcal{F}_{\emptyset}$ and
$dim(\mathcal{F}_{\emptyset})=1$.

\begin{figure}
\centering
\begin{tikzpicture}
\node (lbl1) at (0,2.3) {$\mathcal{P}$};
\draw[black, fill=black] (0,-1.25) circle [radius=0.05]; 
\node (p1) at (0, -1) {$\Pp$};
\draw[] (0,0) circle [x radius=1.5, y radius=2];

\node (lbl2) at (4,2.3) {$\mathcal{F}$};
\node (f1) at (4, -1) {$f_{\Pp}$};
\draw[dotted] (-1,1.5) -- (1,1.5);
\draw[dotted] (-1.3,0.9) -- (1.3,0.9);
\node (fhim) at (-0.2,1.2) {$\Phi^{-1}(f)$};
\node (f) at (3.7, 1.3) {$f$};
\draw[black, fill=black] (4,-1.25) circle [radius=0.05]; 
\draw[black, fill=black] (3.7,0.9) circle [radius=0.05]; 
\draw[dotted,->] (fhim) to [out=40,in=140]  (f);
\draw (4,0) circle [x radius=1.5, y radius=2];
\draw[->] (lbl1) to [out=40,in=140] node[above] {$ \Phi$} (lbl2);
\draw[|->] (p1) to [out=20,in=160] (f1);

\end{tikzpicture}
\caption{The mapping $\Phi: \mathcal{P} \longrightarrow \mathcal{F}$. The image of a
generative classifier $\Pp$ is shown as $f_{\Pp}$, the level set of $f \in
\mathcal{F}$ is displayed with dotted lines.} \label{fig:map}
\end{figure}
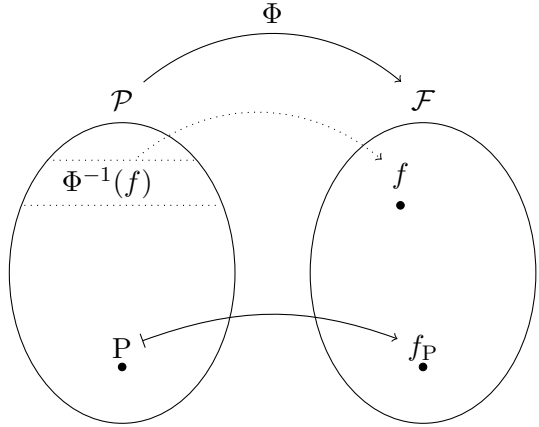

\section{Discrimination Functions Under Conditional Independences}
\label{sec:markov}

In this section we show that any conditional independence statement over the
random vector $(\mathbf{X},C) \in \bchi \times \{-1,1\}$ is equivalent to a set
of linear equations for the induced discrimination function. We then generalize
the result to sets of conditional independence statements induced by some
undirected graph, that is, undirected Markov models.

For $\Pp$ a joint probability distribution over $(\bX,C)$, and $A,B \subset [n]$
two disjoint subsets, let $\bX_A \indep \bX_B |\, (\bX_{-A\cup B},C)$ denote the
conditional independence of $\bX_A$ and $\bX_B$ given $\bX_{-A\cup B}$ and $C$.
Each conditional independence statement, for categorical random variables,
is equivalent to the corresponding toric equation for the probability density
function $p$~\citep{drton2009lectures},

\begin{equation} \label{eq:toric}
p(\bx_A,\bx_B,\bx_D,c)p(\bx'_A,\bx'_B,\bx_D,c) =  p(\bx_A,\bx'_B,\bx_D,c)p(\bx'_A,\bx_B,\bx_D,c),
\end{equation} 
for all $c\in \{-1,+1\}$, $\bx_B,\bx'_B \in \bchi_B$, $\bx_A,\bx'_A \in \bchi_A$
{and $\bx_D \in \bchi_D$ where  $ \bchi_D =  \bchi_{-A \cup B}$}.  Since we
assumed strict positivity of $p$ we can take the logarithm of
Equation~\eqref{eq:toric} and obtain a linear equation in the log probability
density function,
\begin{equation} \label{eq:lntoric}
\ln p(\bx_A,\bx_B,\bx_D,c)+ \ln
p(\bx'_A,\bx'_B,\bx_D,c) = \ln p(\bx_A,\bx'_B,\bx_D,c)+ \ln
p(\bx'_A,\bx_B,\bx_D,c). 
\end{equation}
In Section~\ref{sec:diffoper} we introduce a categorical difference operator centered in
$\bx^0 \in \bchi$ and acting on any function $f \in \mathcal{F}$. In
Section~\ref{sec:condindep} we will show how Equation~\eqref{eq:lntoric} can be
written using this difference operator. This leads to a characterization
of discrimination functions under conditional independence assumptions and in
general under the  undirected Markov property which can be synthetically
expressed using the categorical difference operator. 

\subsection{Difference Operator} \label{sec:diffoper}

\begin{definition} \label{def:difference}
For $f \in \mathcal{F}$ and $A\subseteq[n]$, the {first order}
$A$-difference of {$f$} 
 (centered at $\bx_A^0 \in \bX_A$) is defined as 
$$ \Delta^{\bx_A^0}_{A}f(\bx) = f(\bx) - f(\bx^0_A, \bx_{-A}) .$$ 
\end{definition}
When $A=\{i\}\subset [n]$ we will simply write $\Delta^{x_i^0}_i$ to denote $\Delta^{x_i^0}_{A}$.

Difference operators of order greater than one can be defined iteratively.
In particular, for $A,B\subset [n]$ we are interested in the second order difference
\begin{align}
\Delta^{\bx_A^0}_{A}\Delta^{\bx_B^0}_{B} f(\bx) = & \Delta^{\bx_A^0}_A(f(\bx)-f(\bx_{B}^{0},\bx_{-B})) \nonumber \\ 
                                       = & f(\bx) + f(\bx_{A\cup B}^{0},\bx_{-(A\cup B)}) - f(\bx_{A}^{0},\bx_{-A})- f(\bx_{B}^{0},\bx_{-B})  .\label{eq:secondorder}
\end{align} 

\begin{example} \label{exp:1}
\textrm 
Let $f$ be the function over the two predictors $(X_1,X_2) \in \{0,1\} \times \{1,2,3\}$ shown in Table~\ref{tab:ex1f}.
The second order difference $\Delta_1^0\Delta_2^1 f $ is reported in
Table~\ref{tab:ex1f}. For example, $\Delta_1^0\Delta_2^1 f(1,2)= f(1,2) + f(0,1)
- f(0,2) - f(1,1) = -16 $. 
\begin{table}
\centering
\caption{Discrimination function $f$ and its second order difference of Example~\ref{exp:1}.}
\label{tab:ex1f}
\hspace{2pt}

\begin{tabular}{cc|ccc}
$f$  & & \multicolumn{3}{c}{$X_2$} \\  
    &   &   1 & 2 & 3 \\
 \hline 
\multirow{2}{*}{$X_1$}& 0 & -1& 5& 2 \\
& 1 & 3& -7& -4
\end{tabular} 
\hspace{15pt}
\begin{tabular}{cc|ccc}
$\Delta^{0}_1\Delta^{1}_2f$  & & \multicolumn{3}{c}{$X_2$} \\  
    &   &   1 & 2 & 3 \\
 \hline 
\multirow{2}{*}{$X_1$}& 0 & 0& 0& 0 \\
                      & 1 & 0& -16& -10 
\end{tabular}
\end{table}
\end{example}

Lemma~\ref{lem:x0x1} connects the difference operators centered at different points.
\begin{lemma}
\label{lem:x0x1}
Let $f \in \mathcal{F}$, $\bx_A^0, \bx_A^1 \in \bchi_A,\bx_B^0, \bx_B^1 \in
\bchi_B,  $  and $A,B \subseteq [n]$. Then
\begin{itemize}
\item[(i)] $\Delta^{\bx_A^1}_{A}f(\bx)-  \Delta^{\bx_A^0}_{A}f(\bx) =
\Delta^{\bx_A^1}_{A}f(\bx^0_A, \bx_{-A})$,
\item[(ii)]  $\Delta^{\bx_A^0}_{A}f(\bx) = 0 $ for all $\bx \in \bchi $ if and
only if $ \Delta^{\bx_A^1}_{A}f(\bx) = 0 $ for all $\bx \in \bchi $,
\item[(iii)] $ \Delta^{\bx_A^0}_{A}\Delta^{\bx_B^0}_{B}f(\bx) = 0 $ for all
$\bx \in \bchi $ if and only if $ \Delta^{\bx_A^1}_{A}\Delta^{\bx_B^1}_{B}f(\bx) = 0 $
for all $\bx \in \bchi $.
\end{itemize} 
\end{lemma}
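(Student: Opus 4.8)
The plan is to prove the three items in order, using (i) as the computational engine, deriving (ii) from it, and then bootstrapping (iii) from (ii) together with the commutativity of the two first-order operators. For (i) I would simply expand both first-order differences from Definition~\ref{def:difference}: writing $\Delta^{\bx_A^1}_{A}f(\bx)-\Delta^{\bx_A^0}_{A}f(\bx)=[f(\bx)-f(\bx_A^1,\bx_{-A})]-[f(\bx)-f(\bx_A^0,\bx_{-A})]$, the two copies of $f(\bx)$ cancel and I am left with $f(\bx_A^0,\bx_{-A})-f(\bx_A^1,\bx_{-A})$. Evaluating the right-hand side operator $\Delta^{\bx_A^1}_{A}f$ at the point $(\bx_A^0,\bx_{-A})$ produces exactly the same two terms, since the operator overwrites the $A$-coordinates by $\bx_A^1$ while leaving $\bx_{-A}$ untouched. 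Thus (i) is a one-line identity with no obstacle.

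For (ii) the cleanest route is to note that $\Delta^{\bx_A^0}_{A}f(\bx)=0$ for all $\bx$ is equivalent to $f(\bx)=f(\bx_A^0,\bx_{-A})$ for all $\bx$, i.e. to $f\in\mathcal{F}_{-A}$, a condition that makes no reference to the center $\bx_A^0$ and is therefore equally equivalent to $\Delta^{\bx_A^1}_{A}f\equiv 0$. Alternatively, and more in the spirit of the lemma, I can deduce (ii) directly from (i): if $\Delta^{\bx_A^0}_{A}f\equiv 0$ then it vanishes in particular at $(\bx_A^1,\bx_{-A})$, giving $f(\bx_A^1,\bx_{-A})=f(\bx_A^0,\bx_{-A})$ and hence $\Delta^{\bx_A^1}_{A}f(\bx_A^0,\bx_{-A})=0$; feeding this together with the hypothesis into (i) yields $\Delta^{\bx_A^1}_{A}f\equiv 0$, and the converse is symmetric.

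Part (iii) is the substantive one, and I would prove it by changing one center at a time. The key auxiliary fact is that the two first-order operators commute, $\Delta^{\bx_A^0}_{A}\Delta^{\bx_B^0}_{B}f=\Delta^{\bx_B^0}_{B}\Delta^{\bx_A^0}_{A}f$, which is immediate from the symmetric expression in~\eqref{eq:secondorder}, the overwriting of the $A$- and the $B$-coordinates being interchangeable operations. Granting this, I read $\Delta^{\bx_A^0}_{A}\Delta^{\bx_B^0}_{B}f\equiv 0$ as the statement that the first-order $A$-difference of the single function $h:=\Delta^{\bx_B^0}_{B}f$ vanishes everywhere; applying (ii) to $h$ lets me swap the $A$-center, giving the equivalent statement $\Delta^{\bx_A^1}_{A}\Delta^{\bx_B^0}_{B}f\equiv 0$. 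Using commutativity I rewrite this as $\Delta^{\bx_B^0}_{B}\big(\Delta^{\bx_A^1}_{A}f\big)\equiv 0$, and applying (ii) once more, now to the function $\Delta^{\bx_A^1}_{A}f$ with the roles of $A$ and $B$ exchanged, I swap the $B$-center to obtain $\Delta^{\bx_B^1}_{B}\Delta^{\bx_A^1}_{A}f\equiv 0$, i.e. $\Delta^{\bx_A^1}_{A}\Delta^{\bx_B^1}_{B}f\equiv 0$. Chaining these equivalences gives (iii).

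The only place demanding care is this last argument: I must check that the hypothesis of (ii), namely that the relevant first-order difference vanishes for all $\bx$, is exactly what the second-order hypothesis supplies for the inner function $h$ (and afterwards for $\Delta^{\bx_A^1}_{A}f$), and that the commutativity step is legitimate — it relies on $A$ and $B$ carrying consistent centers, which is automatic in the disjoint case relevant to the conditional-independence application of Section~\ref{sec:condindep}. Everything else is bookkeeping with Definition~\ref{def:difference}.
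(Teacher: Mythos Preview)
Your proposal is correct and follows essentially the same route as the paper: prove (i) by direct expansion of Definition~\ref{def:difference} and then derive (ii) and (iii) from it. The paper is terser --- it simply asserts that ``(ii) and (iii) follow from (i)'' --- whereas you spell out the two-step center swap for (iii) via commutativity and (ii); your caveat about commutativity needing $A$ and $B$ disjoint is accurate and appropriate, and harmless since every application of (iii) in the paper (Remark~\ref{obs:toric}, Lemma~\ref{lem:cisdisc}, Corollary~\ref{cor:xorG1}) has $A\cap B=\emptyset$.
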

\begin{proof} Items \textit{(ii)} and \textit{(iii)} follow from \textit{(i)}. For proving \textit{(i)} we use Definition~\ref{def:difference}
\begin{align*}
\Delta^{\bx_A^1}_{A}f(\bx)-\Delta^{\bx_A^0}_{A}f(\bx) = &f(\bx) - f(\bx_{A}^{1},\bx_{-A}) - f(\bx) + f(\bx_{A}^{0},\bx_{-A})  \\
 =&f(\bx_{A}^{0},\bx_{-A})- f(\bx_{A}^{1},\bx_{-A})    
 =\Delta_{A}^{\bx^1}f(\bx_A^{0},\bx_{-A}).
\end{align*}
\end{proof}

Because of Lemma~\ref{lem:x0x1} we can assume $\bx_A^0$ fixed and write $\Delta_{A}$ for $\Delta^{\bx_A^0}_{A}$.
Furthermore if $f(\bx)=0$ for all $\bx \in \bchi$, we write $f\equiv 0$.
The following lemma, whose proof follows directly from Definition~\ref{def:difference}, collects the basic properties of $\Delta_{A}$.
\begin{lemma} \label{lem:first}
Let $f,g \in \mathcal{F}$  and $A
\subseteq[n]$. Then
\begin{itemize}
\item[(i)] $\Delta_{A}f(\bx) \equiv 0$ if and only if  $f \in \mathcal{F}_{A}$.
\item[(ii)]  $f(\bx)=h(\bx_{-A}) + \Delta_Af(\bx)$ for all $\bx \in \bchi$, where $h \in \mathcal{F}_{-A}$.
\item[(iii)] $\Delta_{A}(\alpha f+\beta g)(\bx) = \alpha \Delta_{A}f(\bx) + \beta\Delta_{A}g(\bx)$ for all $\bx \in \bchi$ and for all $\alpha,\beta \in \mathbb{R}$. 
\end{itemize} 
\end{lemma}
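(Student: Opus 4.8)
The plan is to obtain all three items as one-line consequences of the defining identity $\Delta_A f(\bx) = f(\bx) - f(\bx_A^0, \bx_{-A})$ from Definition~\ref{def:difference}, using the fixed-center convention $\Delta_A = \Delta_A^{\bx_A^0}$. I would treat \textit{(ii)} as the fundamental decomposition, read \textit{(i)} off from it, and deduce \textit{(iii)} from the evident linearity of the reference substitution.

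First I would prove \textit{(ii)}, since it drives \textit{(i)}. Put $h(\bx_{-A}) := f(\bx_A^0, \bx_{-A})$; since setting the $A$-block to the fixed value $\bx_A^0$ leaves a quantity that varies only with $\bx_{-A}$, the function $h$ is constant in $\bx_A$ and hence lies in $\mathcal{F}_{-A}$. The definition then gives, for every $\bx$, the tautological rearrangement $f(\bx) = f(\bx_A^0, \bx_{-A}) + \bigl(f(\bx) - f(\bx_A^0, \bx_{-A})\bigr) = h(\bx_{-A}) + \Delta_A f(\bx)$, which is exactly the asserted identity.

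Item \textit{(i)} then follows immediately from \textit{(ii)}: if $\Delta_A f \equiv 0$ the decomposition collapses to $f = h$, so $f$ is constant in $\bx_A$; conversely, if $f$ does not depend on $\bx_A$ then $f(\bx) = f(\bx_A^0, \bx_{-A})$ holds identically and $\Delta_A f \equiv 0$ by definition. Finally, \textit{(iii)} holds because the map $f \mapsto f(\bx_A^0, \bx_{-A})$ is linear, so subtracting it from $(\alpha f + \beta g)(\bx) = \alpha f(\bx) + \beta g(\bx)$ distributes $\Delta_A$ over the linear combination termwise.

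There is no genuine obstacle here — every step is a direct substitution — so the only point demanding care is the coordinate bookkeeping in \textit{(i)}: vanishing of the $A$-difference characterizes exactly the functions that are \emph{constant along the $A$-directions}, i.e.\ those depending only on $\bx_{-A}$. Since \textit{(ii)} already produces the remainder term $h \in \mathcal{F}_{-A}$, I would check that the characterizing subspace in \textit{(i)} is indeed $\mathcal{F}_{-A}$ (functions independent of $\bx_A$) before concluding, keeping the indexing of $A$ versus $-A$ straight.
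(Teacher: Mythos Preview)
Your argument is correct and matches the paper, which simply asserts that the lemma ``follows directly from Definition~\ref{def:difference}'' without writing anything out; your proposal is precisely the unpacking of that sentence.

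Your final caveat is also well taken: as printed, item \textit{(i)} says $f \in \mathcal{F}_A$, but your computation shows $\Delta_A f \equiv 0$ characterizes functions constant in the $A$-coordinates, i.e.\ $f \in \mathcal{F}_{-A}$. That this is a typo (and not a different convention) is confirmed by the paper's own subsequent uses of \textit{(i)} --- in the proof of Lemma~\ref{lem:second}\textit{(iii)} it infers from $\Delta_A(\Delta_B f)\equiv 0$ that $\Delta_B f = h(\bx_{-A})$, and in Lemma~\ref{lem:diffvar} it applies $\Delta_A$ to a sum of functions in $\mathcal{F}_{-A}$ and concludes the result vanishes --- both of which only make sense with $\mathcal{F}_{-A}$ in place of $\mathcal{F}_A$. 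So you should indeed read \textit{(i)} as ``$\Delta_A f \equiv 0$ iff $f \in \mathcal{F}_{-A}$''.
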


{Lemma~\ref{lem:second} collects useful properties of the second order differences.}
\begin{lemma} \label{lem:second}
Let $f \in \mathcal{F}$  and $A,B \subseteq [n]$. Then 
\begin{itemize}
\item[(i)] $ \Delta_{A}\Delta_{B} f(\bx) = \Delta_{A}f(\bx)  + \Delta_{B}f(\bx) -\Delta_{A \cup B}f(\bx)  $ for all $\bx \in \bchi$.
\item[(ii)] $\Delta_{A}\Delta_{A}f(\bx) =  \Delta_{A}f(\bx)$ for all $\bx \in \bchi$.
\item[(iii)]  $\Delta_{A} \Delta_{B}f(\bx) \equiv 0$ if and only if there exist
a function $h \in \mathcal{F}_{-A}$  and a function  $g \in \mathcal{F}_{-B}$ such that $f(\bx) = h(\bx_{-A}) + g(\bx_{-B})$ for all $\bx \in \bchi$.
\end{itemize}
\end{lemma}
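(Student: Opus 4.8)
Parts (i) and (ii) I would dispatch first by direct computation. For (i), I would substitute Definition~\ref{def:difference} into each of $\Delta_A f$, $\Delta_B f$ and $\Delta_{A\cup B} f$ and observe that in the combination $\Delta_A f + \Delta_B f - \Delta_{A\cup B} f$ the three copies of $f(\bx)$ collapse to a single $f(\bx)$, leaving exactly the four-term expression for $\Delta_A\Delta_B f$ recorded in Equation~\eqref{eq:secondorder}; the only point to watch is that $\bx_A^0$ and $\bx_B^0$ are the restrictions of one fixed $\bx^0$, so they agree on $A\cap B$ and the joint substitution genuinely gives $f(\bx^0_{A\cup B},\bx_{-(A\cup B)})$. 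Part (ii) is then the special case $B=A$ of (i), where $A\cup A=A$ collapses the right-hand side to $\Delta_A f$. The by-product of (i) that I would record for reuse is that its right-hand side is symmetric in $A$ and $B$, so the operators commute: $\Delta_A\Delta_B=\Delta_B\Delta_A$.

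For (iii) the backward implication is routine. Given $f=h(\bx_{-A})+g(\bx_{-B})$ with $h\in\mathcal{F}_{-A}$ and $g\in\mathcal{F}_{-B}$, linearity (Lemma~\ref{lem:first}(iii)) splits $\Delta_A\Delta_B f=\Delta_A\Delta_B h+\Delta_A\Delta_B g$. Since $h$ does not depend on the coordinates in $A$ and $g$ does not depend on those in $B$, Lemma~\ref{lem:first}(i) gives $\Delta_A h\equiv 0$ and $\Delta_B g\equiv 0$; using commutativity, $\Delta_A\Delta_B h=\Delta_B\Delta_A h\equiv 0$ and $\Delta_A\Delta_B g\equiv 0$, so the sum vanishes.

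The forward implication is the substantive part, yet it too reduces to commutativity. Starting from $\Delta_A\Delta_B f\equiv 0$, I would apply Lemma~\ref{lem:first}(ii) to write $f(\bx)=h(\bx_{-A})+\Delta_A f(\bx)$ with $h(\bx_{-A})=f(\bx^0_A,\bx_{-A})\in\mathcal{F}_{-A}$, so the decomposition is forced with the choice $g:=\Delta_A f$; it then remains only to verify $g\in\mathcal{F}_{-B}$. By the commutativity from (i), $\Delta_B g=\Delta_B\Delta_A f=\Delta_A\Delta_B f\equiv 0$, and $\Delta_B g\equiv 0$ is precisely the statement $g\in\mathcal{F}_{-B}$ (Lemma~\ref{lem:first}(i)); hence $f=h+g$ with $h\in\mathcal{F}_{-A}$, $g\in\mathcal{F}_{-B}$. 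The conceptual step I expect to matter most is noticing that the canonical splitting of Lemma~\ref{lem:first}(ii) already supplies the required decomposition, with commutativity certifying that the remainder lands in $\mathcal{F}_{-B}$; the only genuine technical care is that $A$ and $B$ need not be disjoint, which is exactly why I would route the argument through the symmetric identity (i) instead of manipulating the four-term formula by hand.
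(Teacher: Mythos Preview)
Your proof is correct and follows essentially the same approach as the paper: parts (i) and (ii) are dispatched by direct computation from Definition~\ref{def:difference}, and part (iii) is obtained from the decomposition of Lemma~\ref{lem:first}(ii) combined with Lemma~\ref{lem:first}(i). The only cosmetic difference is that for the forward implication the paper splits off the inner difference, writing $f=g(\bx_{-B})+\Delta_B f$ and reading $\Delta_A(\Delta_Bf)\equiv 0$ directly as $\Delta_Bf\in\mathcal{F}_{-A}$, which avoids the commutativity detour you take after splitting off $\Delta_A f$ instead.
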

\begin{proof}
Points \textit{(i)} and \textit{(ii)} follow directly by
Definition~\ref{def:difference}.  To prove point \textit{(iii)} we just observe
that from point \textit{(i)}  of Lemma~\ref{lem:first}  we have that
$$ \Delta_{A} \Delta_{B}f(\bx) = 0 \text{ for all } \bx \in \bchi
\text{ if and only if } \Delta_{B}f(\bx) = h(\bx_{-A}),$$
and thus by point \textit{(ii)} of Lemma~\ref{lem:first}, $f(\bx) = g(\bx_{-B}) + h(\bx_{-A})$
for all $\bx \in \bchi$.
\end{proof}

For $A \subseteq [n]$ the $A$-difference operator of $f$ cannot be a non zero function of variables $\bx_{-A}$ solely, as shown in Lemma~\ref{lem:diffvar}.
\begin{lemma} \label{lem:diffvar}
Let $f\in \mathcal{F}$ and $A \subseteq [n]$. If $\Delta_Af(\bx) \in \mathcal{F}_{-A}$, then $\Delta_Af \equiv 0$.
\end{lemma}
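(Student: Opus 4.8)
The plan is to evaluate the $A$-difference at its own centering point and then invoke the membership hypothesis. First I would record the elementary fact that $\Delta_A f$ vanishes identically on the slice $\bx_A = \bx_A^0$: directly from Definition~\ref{def:difference},
$$\Delta_A f(\bx_A^0,\bx_{-A}) = f(\bx_A^0,\bx_{-A}) - f(\bx_A^0,\bx_{-A}) = 0$$
for every $\bx_{-A} \in \bchi_{-A}$. This uses nothing beyond the definition of the first order difference and the fact that it is centered at $\bx_A^0$.

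Next I would bring in the hypothesis $\Delta_A f \in \mathcal{F}_{-A}$. By the definition of $\mathcal{F}_{-A}$, the function $\Delta_A f$ does not depend on the coordinates indexed by $A$, so replacing $\bx_A$ by $\bx_A^0$ leaves its value unchanged: $\Delta_A f(\bx) = \Delta_A f(\bx_A^0,\bx_{-A})$ for all $\bx \in \bchi$. Combining this with the previous step, the right-hand side is $0$, whence $\Delta_A f(\bx) = 0$ for every $\bx \in \bchi$, that is, $\Delta_A f \equiv 0$.

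I expect no real obstacle here: the statement is a one-line consequence of the fact that a first order difference is pinned to zero on its centering slice, together with the observation that a function independent of $\bx_A$ is fully determined by its restriction to $\bx_A = \bx_A^0$. The only point requiring care is keeping the roles of the fixed center $\bx_A^0$ and the free argument $\bx_A$ distinct; and, should one worry about the particular choice of center, Lemma~\ref{lem:x0x1}(ii) guarantees that the conclusion $\Delta_A f \equiv 0$ is independent of which $\bx_A^0$ is used.
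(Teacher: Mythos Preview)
Your argument is correct. It differs from the paper's route, though both are short. The paper uses Lemma~\ref{lem:first}(ii) to write $f=h+\Delta_A f$ with $h\in\mathcal{F}_{-A}$; since by hypothesis $\Delta_A f\in\mathcal{F}_{-A}$ as well, the sum lies in $\mathcal{F}_{-A}$, and Lemma~\ref{lem:first}(i) then gives $\Delta_A f=\Delta_A(h+\Delta_A f)\equiv 0$. You instead bypass the decomposition entirely: you observe directly from Definition~\ref{def:difference} that $\Delta_A f$ vanishes on the slice $\bx_A=\bx_A^0$, and then the hypothesis $\Delta_A f\in\mathcal{F}_{-A}$ propagates that zero to all of~$\bchi$. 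Your version is slightly more self-contained, needing only the raw definitions rather than the intermediate Lemma~\ref{lem:first}; the paper's version has the mild advantage of making explicit that the underlying reason is $f\in\mathcal{F}_{-A}$.
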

\begin{proof}
From point \textit{(ii)} of Lemma~\ref{lem:first} we have that 
$$ f(\bx) = \Delta_Af(\bx) + h(\bx_{-A}).$$
Thus now applying point \textit{(i)} of Lemma~\ref{lem:first} we obtain 
$$ \Delta_A f(\bx) = \Delta_A (\Delta_Af(\bx) + h(\bx_{-A}))=0.$$  
\end{proof}

\subsection{Conditional Independence} \label{sec:condindep}
The logarithm of the toric equation
(Equation~\eqref{eq:lntoric}) can be written as the second order
difference operator (Equation~\eqref{eq:secondorder}) of the logarithm
probability density function equal to zero. 
\begin{remark} \label{obs:toric}
Consider $\Pp \in \mathcal{P}$ with density $p$ such that $\bX_A \indep \bX_B |\,  ( \bX_{-A \cup B}, C )  $ then Equation~\eqref{eq:toric} is equivalently written as
$$ \Delta_A \Delta_B \ln   p(\bx , c )   \equiv 0,  \quad \text{for all } \; c \in \{-1,+1\}.$$
\end{remark}

A conditional independence statement among the predictor variables is equivalent
to the related second order difference of the discrimination function being
equal to zero as shown by Lemma~\ref{lem:cisdisc}.
\begin{lemma} \label{lem:cisdisc}
Let $\mathbf{X}=(X_1,\ldots,X_n)$ be a predictor vector of discrete random variables and $C$ a binary class variable.  Let $A,B,D$ {be} a partition of $[n]$ and $f \in \mathcal{F}$.
The following statements are equivalent 
\begin{itemize}
\item[(i)] there exists a generative classifier $\Pp \in \Phi^{-1}(f)$ such that $\bX_A \indep \bX_B |\, (\bX_{D}, C)$ holds 
\item[(ii)] $\Delta_{A}\Delta_{B}f\equiv 0$.
\end{itemize}
\end{lemma}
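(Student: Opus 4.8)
The plan is to prove the two implications separately, leaning on Remark~\ref{obs:toric}, which recasts the statement $\bX_A \indep \bX_B \mid (\bX_D, C)$ as the vanishing of $\Delta_A \Delta_B \ln p(\cdot, c)$ for each $c \in \{-1,+1\}$. The bridge between $f$ and these two log-densities is the identity $f(\bx) = \ln p(\bx, +1) - \ln p(\bx, -1)$ from Definition~\ref{def:discrimination}, combined with linearity of the second order difference operator, which follows from applying point \textit{(iii)} of Lemma~\ref{lem:first} to each factor in the composition $\Delta_A \Delta_B$. Note first that, since $A,B,D$ partition $[n]$, we have $D = [n] \setminus (A \cup B)$, so $\bX_D = \bX_{-A \cup B}$ and Remark~\ref{obs:toric} applies verbatim.

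For the implication \textit{(i)} $\Rightarrow$ \textit{(ii)}, I would take a generative classifier $\Pp \in \Phi^{-1}(f)$ for which the independence holds. By Remark~\ref{obs:toric}, both $\Delta_A \Delta_B \ln p(\cdot, +1) \equiv 0$ and $\Delta_A \Delta_B \ln p(\cdot, -1) \equiv 0$. Writing $f$ as the difference of the two log-densities and invoking linearity gives $\Delta_A \Delta_B f = \Delta_A \Delta_B \ln p(\cdot, +1) - \Delta_A \Delta_B \ln p(\cdot, -1) \equiv 0$. This direction is routine once Remark~\ref{obs:toric} is in hand.

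The substantive direction is \textit{(ii)} $\Rightarrow$ \textit{(i)}, where I must exhibit a witness $\Pp$. The key idea is that, because $\Delta_A \Delta_B f \equiv 0$, I can load all of $f$ onto the positive class and leave the negative class with a constant log-density, so that the second order difference condition transfers automatically to both classes. Concretely, set $Z = \sum_{\bx \in \bchi}(1 + e^{f(\bx)})$ and define $p(\bx, +1) = e^{f(\bx)}/Z$ and $p(\bx, -1) = 1/Z$. This is strictly positive and normalized, hence a legitimate element of $\mathcal{P}$, and by construction $\ln\!\left(p(\bx,+1)/p(\bx,-1)\right) = f(\bx)$, so $\Pp \in \Phi^{-1}(f)$. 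It then remains to check the independence through Remark~\ref{obs:toric}: the function $\ln p(\bx, -1) = -\ln Z$ is constant, so $\Delta_A \Delta_B \ln p(\cdot, -1) \equiv 0$, while $\ln p(\bx, +1) = f(\bx) - \ln Z$ yields $\Delta_A \Delta_B \ln p(\cdot, +1) = \Delta_A \Delta_B f \equiv 0$ by hypothesis, since a constant has vanishing second order difference.

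I expect no genuine obstacle beyond guessing the right witness distribution; the asymmetric choice above is the cleanest, although a symmetric one with $\ln p(\bx, \pm 1) = \pm f(\bx)/2 - \ln Z$ works identically and may read more naturally. The only point to state carefully is the normalization and strict positivity of the constructed $\Pp$, which are immediate from the definition of $Z$, and the translation between the difference-operator form and the conditional independence statement, which is exactly the content of Remark~\ref{obs:toric}.
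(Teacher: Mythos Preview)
Your proof is correct and follows essentially the same route as the paper: both directions rely on Remark~\ref{obs:toric} together with the linearity of $\Delta_A\Delta_B$ and the identity $f_{\Pp}=\ln p(\cdot,+1)-\ln p(\cdot,-1)$, and the converse is handled by an explicit witness distribution. The paper phrases the witness slightly more generally as $\ln p(\bx,c)=g(\bx)+k+\tfrac{c}{2}f(\bx)$ for any $g$ with $\Delta_A\Delta_B g\equiv 0$; your asymmetric choice corresponds to $g=f/2$ and your alternative symmetric choice to $g\equiv 0$, so both fall within the paper's construction.
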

\begin{proof}
First we prove that \textit{(i)} implies \textit{(ii)}. 
Let $\Pp$ be a probability distribution  
such that $\bX_A \indep \bX_B |\, (\bX_D, C)$ and $f(\bx)=f_{\Pp}(\bx)$. Then by Remark~\ref{obs:toric}
$$ \Delta_A \Delta_B \left( \ln p(\bx,  c) \right) = 0 \quad \text{for all } \bx \in \bchi\text{, }c \in \{-1,+1\} .$$ 
From the linearity of $\Delta_A \Delta_B$ and {the fact that}  $f_P(\bx) =\ln
p(\bx,+1)-\ln p(\bx,-1)$  we obtain \textit{(ii)}.

Conversely, we need to define $\Pp$, with density $p$, such that $\Delta_A \Delta_B \ln p(\bx , c) = 0$ for all $\bx \in \bchi$ and $c \in \{-1,+1\}$, and that satisfies $f_{\Pp}(\bx) = f(\bx)$ for all  $\bx \in \bchi$.
For any  $g(\bx): \bchi 
{ \longrightarrow} \mathbb{R}$ such that $\Delta_A\Delta_B g \equiv 0$ (for example $g \equiv 0$), 
a suitable $p$ is defined as
$$ \ln p(\bx, c) = g(\bx) + k + \frac{c}{2}f(\bx) $$
where $k$ is an appropriate normalization constant, that is,
$$ \sum_{\bx \in \bchi}  \exp(g(\bx)) \left( 1+ e^{f(\bx)} \right) = \exp(-k)  \textcolor{blue}{.}   $$
\end{proof}

\subsection{Markov classifiers} \label{sec:markovclassifier}
In this section we consider generative classifiers such that the underlying   
probability satisfies the undirected Markov property with respect to a given
graph. To define Markov classifiers we recall some basic notions on
separation and connectivity on graphs.  Given an undirected graph $\mathcal{G}$
with node set $V$ and edges $E$,  two nodes $a, b \in V$ are adjacent if
the edge $\{a,b\} \in E$. Given  $A\subset V$, the subgraph induced by $A$,
denoted $\mathcal{G}_A$, is said to be a complete subgraph if every pair of
nodes $a,b \in A$ are adjacent. Complete subgraphs that are maximal
with respect to inclusion are called \textit{cliques}. The set of  cliques of
the graph $\mathcal{G}$ is denoted $\mathcal{C}(\mathcal{G})$.  Moreover given
three disjoint subsets of nodes $A,B,D \subset V$, $D$ separates $A$ and $B$
in $\mathcal{G}$ if every path that connects $A$ to $B$ in $\mathcal{G}$ passes
through $D$.

\begin{definition}
Let $\mathcal{G}$ be 
an undirected graph over nodes $[n]$.
A $\mathcal{G}$-Markov classifier is a generative classifier $\Pp \in \mathcal{P}$ such that
$$  X_i \indep X_j |\, (\bX_{-\{i,j\}}, C) \quad \text{for any pair of
    non-adjacent nodes } i,j. $$ 
\end{definition}
Alternatively, we can define a $\mathcal{G}$-Markov classifier as a generative
classifier $\Pp \in \mathcal{P}$ that satisfies pairwise (or equivalently global
or local since $\Pp>0$) Markov property~\citep{lauritzen1996} with respect to an
extended undirected graph; the extended graph is defined adding a node
corresponding to $C$ in the graph $\mathcal{G}$ and connecting $C$ to all the
nodes of the predictor variables (see Figure~\ref{fig:markovclass}). Thus, if
$\Pp$ is a $\mathcal{G}$-Markov classifier the following conditional
independence statement holds for every $A,B \subset [n]$ such that $- (A \cup
B)$ separates $A$ and $B$ in $\mathcal{G}$,
\begin{equation} \label{eq:cismarkovclassifier}
\bX_A \indep \bX_B | \, (\bX_{-A\cup B},C).
\end{equation}
The set of $\mathcal{G}$-Markov classifiers will be denoted by $\mathcal{P}_{\mathcal{G}}$.

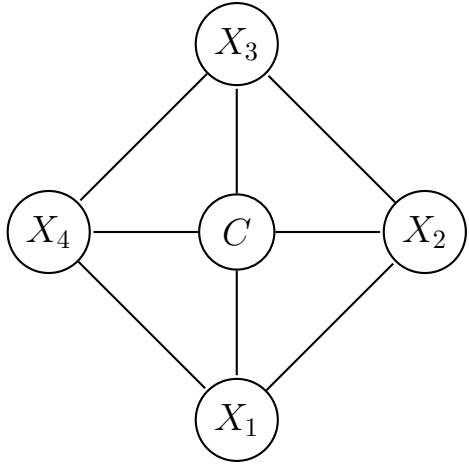
\begin{figure}
\centering
\begin{tikzpicture}
[-,>=stealth',shorten >=1pt,auto,node distance=1cm,
  thick,main node/.style={circle,fill=white!10,draw,font=\sffamily\Large\bfseries},minimum width=1cm]  
  \node[main node] (0) at (0,0) {$C$};
  \node[main node] (1) at (0,-2.5) {$X_1$};
  \node[main node] (3) at (0,2.5) {$X_3$};
  \node[main node] (2) at (2.5,0) {$X_2$};  
  \node[main node] (4) at (-2.5,0) {$X_4$};  
  \path[every node/.style={font=\sffamily\small}]
    (0) edge node [left] {}  (2)
        edge node [left] {}  (1)
         edge node [left] {}  (4)
        edge node[left] {} (3)
     (1) edge (2)
     (2) edge (3)
     (3) edge (4)  
     (4) edge (1);
\end{tikzpicture}
\caption{Example of the structure of a Markov classifier, where the node
    corresponding to  $C$ is adjacent to all the predictor nodes.}
\label{fig:markovclass}
\end{figure}

For $\mathcal{G}$-Markov classifiers the following result similar
to the Hammersley-Clifford theorem~\citep{hammersley1971, grimmet1973,
gandolfi2017} holds. The proof of Theorem~\ref{theo:discrete} is partly based on the
M\"{o}ebius inversion lemma~\citep{Rota1987} as in the proof of the
Hammersley-Clifford theorem~\citep{grimmet1973, lauritzen1996}.

\begin{theorem}
\label{theo:discrete}
For any discrimination function $f  \in \mathcal{F}$ and any undirected graph
    $\mathcal{G}$ over $X_1,\ldots,X_n$ the following statements are equivalent:
\begin{itemize}
    \item[(i)] There exists a $\mathcal{G}$-Markov classifier $\Pp \in
        \Phi^{-1}(f)$, 
    \item[(ii)] $\Delta_A\Delta_Bf\equiv0$ for every $A,B$ separated
        by $-(A\cup B)$ in $\mathcal{G}$, 
    \item[(iii)] $f(\bx) = \sum_{A\subseteq [n] } g_A(\mathbf{x}_A)$, such that
        $g_A \in \mathcal{F}_{A}$ and  $g_A \equiv 0$ if $\mathcal{G}_A$ is not complete.
\end{itemize}
\end{theorem}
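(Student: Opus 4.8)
The plan is to prove the two equivalences (i)$\Leftrightarrow$(ii) and (ii)$\Leftrightarrow$(iii) separately, using Lemma~\ref{lem:cisdisc} to pass between the probabilistic and the analytic descriptions and a M\"obius-type inversion to produce the clique decomposition. For (i)$\Rightarrow$(ii) I would start from a $\mathcal{G}$-Markov classifier $\Pp\in\Phi^{-1}(f)$; since $\Pp>0$ its pairwise Markov property is equivalent to the global one, so \eqref{eq:cismarkovclassifier} holds for every pair $A,B$ separated by $-(A\cup B)$. As $A,B,-(A\cup B)$ partition $[n]$, applying the implication (i)$\Rightarrow$(ii) of Lemma~\ref{lem:cisdisc} to this single $\Pp$, once for each separated pair, yields $\Delta_A\Delta_B f\equiv 0$, which is (ii). For the converse (ii)$\Rightarrow$(i) I would reuse the construction in the proof of Lemma~\ref{lem:cisdisc} with $g\equiv 0$, that is, define $\Pp$ by $\ln p(\bx,c)=k+\tfrac{c}{2}f(\bx)$ with $k$ a normalizing constant. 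This $\Pp$ is strictly positive and satisfies $f_{\Pp}=f$; moreover for every non-adjacent pair $i,j$ one has $\Delta_i\Delta_j\ln p(\bx,c)=\tfrac{c}{2}\Delta_i\Delta_j f(\bx)$, which vanishes by (ii) because $\{i\},\{j\}$ are separated by $-\{i,j\}$. By Remark~\ref{obs:toric} the pairwise independences $X_i\indep X_j\mid(\bX_{-\{i,j\}},C)$ hold, so $\Pp$ is a $\mathcal{G}$-Markov classifier inducing $f$.

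The core of the argument is (ii)$\Rightarrow$(iii), which I would carry out by M\"obius inversion centred at the reference point $\bx^0$ already fixed for $\Delta_A$. For $S\subseteq[n]$ set $g_S(\bx)=\sum_{B\subseteq S}(-1)^{|S|-|B|}f(\bx_B,\bx^0_{-B})$; each term, and hence $g_S$, depends only on $\bx_S$, so $g_S\in\mathcal{F}_S$, and exchanging the order of summation shows $\sum_{S\subseteq[n]}g_S=f$ since the inner sum $\sum_{S\supseteq B}(-1)^{|S|-|B|}$ vanishes unless $B=[n]$. It remains to prove $g_S\equiv 0$ whenever $\mathcal{G}_S$ is not complete. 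Fixing a non-adjacent pair $i,j\in S$ and writing $f_B:=f(\bx_B,\bx^0_{-B})$, I would group the $2^{|S|}$ terms of $g_S$ according to $B\cap\{i,j\}$; for each $B'\subseteq S\setminus\{i,j\}$ the four corresponding terms combine, after tracking signs, into $f_{B'\cup\{i,j\}}-f_{B'\cup\{i\}}-f_{B'\cup\{j\}}+f_{B'}$. By \eqref{eq:secondorder} this bracket is exactly $\Delta_i\Delta_j f$ evaluated at the point whose coordinates on $B'\cup\{i,j\}$ are free and the rest equal $\bx^0$. Since non-adjacent $i,j$ are separated by $-\{i,j\}$, hypothesis (ii) gives $\Delta_i\Delta_j f\equiv 0$, so every bracket vanishes and $g_S\equiv 0$. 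Discarding the zero terms leaves $f=\sum_{S:\,\mathcal{G}_S\text{ complete}}g_S$, which is (iii).

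Finally, for (iii)$\Rightarrow$(ii) I would use the separation structure. Let $A,B$ be separated by $-(A\cup B)$. If $\mathcal{G}_S$ is complete and $g_S\not\equiv 0$, then $S$ cannot meet both $A$ and $B$, since two such nodes would be adjacent and contradict separation; hence $S\subseteq -A$ or $S\subseteq -B$. Grouping the surviving terms accordingly writes $f=h(\bx_{-A})+g(\bx_{-B})$ with $h\in\mathcal{F}_{-A}$ and $g\in\mathcal{F}_{-B}$, and Lemma~\ref{lem:second}(iii) then gives $\Delta_A\Delta_B f\equiv 0$, proving (ii). The step I expect to be most delicate is the sign bookkeeping in (ii)$\Rightarrow$(iii) that recasts the M\"obius coefficient $g_S$ as a collection of second-order differences $\Delta_i\Delta_j f$ over the non-adjacent pair; once this identification is clean, the Markov hypothesis enters exactly through those vanishing differences.
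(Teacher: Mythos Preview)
Your proof is correct and follows essentially the same route as the paper: (i)$\Rightarrow$(ii) via Lemma~\ref{lem:cisdisc}, the key step (ii)$\Rightarrow$(iii) via the M\"obius inversion $g_S(\bx)=\sum_{B\subseteq S}(-1)^{|S|-|B|}f(\bx_B,\bx^0_{-B})$ together with the identification of the four-term bracket as $\Delta_i\Delta_j f$ at a suitable point, and the construction $\ln p(\bx,c)=k+\tfrac{c}{2}f(\bx)$ to recover a $\mathcal{G}$-Markov classifier. The only organizational difference is that the paper closes the cycle (iii)$\Rightarrow$(i) directly (allowing an arbitrary clique-factorizing $g$ in the exponent and invoking Lemma~\ref{lem:second} for (iii)$\Rightarrow$(ii) along the way), whereas you prove (ii)$\Rightarrow$(i) and (iii)$\Rightarrow$(ii) separately; your explicit separation argument for (iii)$\Rightarrow$(ii) (a complete $S$ cannot meet both $A$ and $B$, hence $f=h(\bx_{-A})+g(\bx_{-B})$) is exactly what underlies the paper's terse appeal to Lemma~\ref{lem:second}(iii).
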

\begin{proof}
Item \textit{(i)} implies directly \textit{(ii)} by
Lemma~\ref{lem:cisdisc} applied to every conditional independence statements in Equation (\ref{eq:cismarkovclassifier}).

To prove that \textit{(ii)} implies \textit{(iii)} let $V_A(\mathbf{x}_A)= f(\bx_A, \bx^0_{-A}) $ and
$$ g_A(\mathbf{x}_A) = \sum_{B\subseteq A} (-1)^{|A\setminus B|}V_{B}(\mathbf{x}_B).$$

From M\"{o}ebius inversion lemma~\citep{Rota1987}  
we have that
    $$f(\mathbf{x})=V_{[n]}(\bx)=\sum_{A\subseteq [n]} g_A(\mathbf{x}_A).$$
We just have to show that $g_A \equiv 0$ if $\mathcal{G}_A$ is not complete.
Let $A$ be a subset of $[n]$  such that $\mathcal{G}_A$ is not complete, then
    there exist $a,b \in A$ such that $a$ and $b$ are not adjacent. Thus we can
    write, for $D=A\setminus\{a,b\}$,
{\small
\begin{align*}
g_A(\mathbf{x}_A) = & \sum_{B\subseteq D} (-1)^{|D\setminus
    B|}\left(V_{B}(\mathbf{x}_B)-V_{B\cup \{a\}}(\mathbf{x}_B,x_a) - V_{B\cup
    \{b\}}(\mathbf{x}_B,x_b) + V_{B\cup \{a,b\}}(\mathbf{x}_B,x_a,x_b) \right)  =\\ = &
\sum_{B\subseteq D} (-1)^{|D\setminus B|} \Delta_{a} \Delta_{b} f =0
\end{align*}
}
where the last equality is due to the fact that $\Delta_a\Delta_b f \equiv 0$ by 
item \textit{(ii)} since $a$ and $b$ are not adjacent.

Finally to prove that \textit{(iii)} implies \textit{(i)} let $\operatorname{P}$ be
a generative classifier with density:
\begin{equation} \label{eq:param1}
p(x_1,\ldots,x_n, c) =  \exp\left(K + \frac{c}{2}f(\bx) + g(\bx) \right)
\end{equation}
where $g(\bx)= \sum_{A \subseteq [n]} g_A(\bx_A)$ and  $g_A \equiv 0$ when $\mathcal{G}_A$ is not complete and $K$ is the appropriate normalization constant.
Item \textit{(iii)} implies item \textit{(ii)} by Lemma~\ref{lem:second} and
    thus, obviously, $\Delta_i \Delta_j \ln p \equiv 0$ for every $i,j \in [n]$
    not adjacent in $\mathcal{G}$. Thereafter, $\Pp$ is a $\mathcal{G}$-Markov classifier by Remark~\ref{obs:toric}.
The induced discrimination function $f_{\Pp}$ is clearly equal to $f$, for every choice of $g$, proving \textit{(i)}.

\end{proof}

When a function $f \in \mathcal{F}$ satisfies point \textit{(ii)} in
Theorem~\ref{theo:discrete} for a given graph $\mathcal{G}$ we will concisely write $\Delta^{2}_{\mathcal{G}}f\equiv 0$. 
Moreover we can observe that the factorization in point \textit{(iii)} can be  written using only 
the set of cliques, $\mathcal{C}(\mathcal{G})$, of the graph $\mathcal{G}$.
Thus Theorem~\ref{theo:discrete} can be summarized by the following equivalences,
$$  \Pp \in \mathcal{P}_{\mathcal{G}} \cap \Phi^{-1}(f)  \text{ if and only if } \Delta_{\mathcal{G}}^{2}f \equiv 0 \text{ if and only if } f(\bx)=\sum_{A \in \mathcal{C}(\mathcal{G})} g_A(\bx_A) .$$

\begin{definition}
The set of $f \in \mathcal{F}$ such that $\Delta^{2}_{\mathcal{G}}f\equiv 0$ is denoted as $\mathcal{F}_{\mathcal{G}}$.
\end{definition}

By Theorem~\ref{theo:discrete} a function $f\in \mathcal{F}_{\mathcal{G}}$ is the sum of functions $g_A \in \mathcal{F}_A$  depending only on the predictors $\bX_A$ in a clique  $A \in \mathcal{C}(\mathcal{G})$.

\subsection{Complexity of Markov Classifiers} \label{sec:complexity}

Following  \cite{ling2003} we use the contained XORs to measure the complexity of binary classifiers and thus to quantify their expressive power. 

\begin{definition} \label{def:xorcont}
Let $\phi: \bchi \rightarrow \{-1,+1\}$ be a binary classifier over predictors
    $\bX$ and $A\subseteq [n]$. The function  $\phi$ is said to contain an $A$-XOR (or an XOR
    among variables $\bx_A$) if there exist $\bx_{-A}^0 \in \bchi_{-A}$ and
    $\dot{\bx}_A, \ddot{\bx}_A \in \bchi_A$, such that,
    $$ \phi(\bx_A,\bx^0_{-A}) = \prod_{i\in A} (-1)^{\delta_{\dot{x}_i}(x_i)}
    \text{ for all } \bx_A \in \bigtimes_{i \in A} \{ \dot{x}_i, \ddot{x}_i \},$$ 
    where $\delta_{\dot{x_i}}(x_i)=1$ if $x_i = \dot{x_i}$ and $0$ otherwise. 
\end{definition} 
Definition~\ref{def:xorcont} extends the concept of \textit{Boolean XOR} (or
\textit{logical parity} function)~\citep{odonnell2014} to binary functions over
categorical variables. Indeed, if $\phi$ contains an $A$-XOR as in
Definition~\ref{def:xorcont} then the following is a Boolean XOR function over
$|A|$ Boolean variables,
$$
\begin{array}{ccccc}
\psi \, : &  \{-1,+1\}^{|A|} &   \longrightarrow &         \{-1,+1\} \\ 
          &    \mathbf{k}     &   \longmapsto & \phi(\tau(\mathbf{k}), \bx_{-A}^0) &= \prod_{i} k_i
\end{array}
$$ 
where $\tau(\mathbf{k}):\{-1,+1\}^{|A|} \rightarrow  \bigtimes_{i \in A} \{ \dot{\bx}_i, \ddot{\bx}_i \}$ is the coordinate-wise mapping that assigns $-1$ to $\dot{x}_i$ and $+1$ to $\ddot{x}_i$.  

Definition~\ref{def:xorcont} implies  that if $\phi$ contains an $A$-XOR  then $\phi$ contains a $B$-XOR  for every $B \subseteq A$. Moreover, if $\phi$ contains an $A$-XOR we have that, for some $\dot{\bx}_A, \ddot{\bx}_A \in \bchi_A$ and $j \in A$, 
\begin{equation} \label{eq:xoreq}
\phi(\dot{x}_j, \bx_{A\setminus \{j\}}, \bx^{0}_{-A}) = - \phi(\ddot{x}_j, \bx_{A\setminus \{j\}}, \bx^{0}_{-A}) \text{ for all } \bx_{A\setminus \{j\} } \in \bigtimes_{i \in A\setminus \{j\}} \{ \dot{x}_i, \ddot{x}_i \}.
\end{equation}

As a corollary of Theorem~\ref{theo:discrete}, the decision function
of a $\mathcal{G}$-Markov classifier can contain XORs just among variables that
belong to a clique of $\mathcal{G}$.  
\begin{corollary} \label{cor:xorG1}
If $f \in \mathcal{F}_{\mathcal{G}}$  and $\phi=\sign(f)$ contains an $A$-XOR, then $\mathcal{G}_A$ is a complete subgraph of $\mathcal{G}$.
Equivalently, if $\mathcal{G}_A$ is not complete, 
then there does not exist a $\mathcal{G}$-Markov classifier such that its induced decision function contains an $A$-XOR. 
\end{corollary}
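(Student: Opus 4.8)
The plan is to argue by contradiction, leveraging the factorization in item \textit{(iii)} of Theorem~\ref{theo:discrete} together with the characterizing identity~\eqref{eq:xoreq} of an $A$-XOR. Suppose $f \in \mathcal{F}_{\mathcal{G}}$, that $\phi = \sign(f)$ contains an $A$-XOR, but that $\mathcal{G}_A$ is \emph{not} complete. Since $\mathcal{G}_A$ is not complete there exist two non-adjacent nodes $a, b \in A$, and by item \textit{(ii)} of Theorem~\ref{theo:discrete} this gives $\Delta_a \Delta_b f \equiv 0$. The strategy is to show that this second-order difference, when evaluated at the $2^{|A|}$ XOR-witnessing points, is forced to be nonzero by the sign-alternation~\eqref{eq:xoreq}, producing the contradiction.

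First I would set up the XOR witnesses: by Definition~\ref{def:xorcont} there exist $\bx^0_{-A} \in \bchi_{-A}$ and $\dot{\bx}_A, \ddot{\bx}_A \in \bchi_A$ such that $\phi(\bx_A, \bx^0_{-A}) = \prod_{i \in A}(-1)^{\delta_{\dot{x}_i}(x_i)}$ on the combinatorial box $\bigtimes_{i\in A}\{\dot{x}_i, \ddot{x}_i\}$. Fix the coordinates $x_i$ for $i \in A \setminus \{a,b\}$ to the witness values, and also fix $\bx_{-A} = \bx^0_{-A}$; the only free coordinates are then $x_a$ and $x_b$, each ranging over $\{\dot{x}_a, \ddot{x}_a\}$ and $\{\dot{x}_b, \ddot{x}_b\}$ respectively. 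Because of the restriction in~\eqref{eq:xoreq}, applied once to the coordinate $a$ and once to $b$, the sign of $\phi$ at these four points alternates exactly like a $2$-variable parity: $\phi$ takes value $+1$ on two ``diagonal'' corners and $-1$ on the other two (or vice versa). Since $\phi = \sign(f)$ and $f$ is nowhere zero (the generative classifier is strictly positive, so $f$ is a genuine log-ratio), $f$ itself must be strictly positive on one diagonal pair and strictly negative on the other.

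The key step is to choose the difference operators to be centered precisely at these witnesses, which Lemma~\ref{lem:x0x1}\textit{(iii)} permits: $\Delta_a\Delta_b f \equiv 0$ is independent of the centering point, so I may take $\Delta_a = \Delta^{\dot{x}_a}_a$ and $\Delta_b = \Delta^{\dot{x}_b}_b$. Evaluating the expansion~\eqref{eq:secondorder} of $\Delta_a\Delta_b f$ at the corner $\bx$ with $x_a = \ddot{x}_a$, $x_b = \ddot{x}_b$ gives exactly the alternating sum
\begin{equation*}
\Delta_a\Delta_b f(\bx) = f(\ddot{x}_a,\ddot{x}_b) + f(\dot{x}_a,\dot{x}_b) - f(\dot{x}_a,\ddot{x}_b) - f(\ddot{x}_a,\dot{x}_b),
\end{equation*}
where the remaining coordinates are held fixed as above. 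The two positive terms come from one diagonal of the box and the two negative terms from the other; by the sign pattern just established, the two added terms share one strict sign and the two subtracted terms share the opposite strict sign, so the four contributions cannot cancel and the sum is strictly nonzero. This contradicts $\Delta_a\Delta_b f \equiv 0$, so $\mathcal{G}_A$ must be complete. The equivalent contrapositive formulation then follows immediately from the ``if and only if'' in Theorem~\ref{theo:discrete}.

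The main obstacle I anticipate is bookkeeping rather than conceptual: one must verify carefully that the sign-alternation~\eqref{eq:xoreq} indeed forces the \emph{same} strict sign on both diagonal corners simultaneously, i.e.\ that applying the parity relation in the $a$-direction and then the $b$-direction is consistent and does not secretly allow a zero or a sign cancellation. Making the diagonal/anti-diagonal sign assignment explicit (for instance by writing $\sign f(\dot{x}_a,\dot{x}_b)=\sign f(\ddot{x}_a,\ddot{x}_b)=+1$ and $\sign f(\dot{x}_a,\ddot{x}_b)=\sign f(\ddot{x}_a,\dot{x}_b)=-1$, up to an overall flip) removes the ambiguity and makes the strict nonvanishing of the four-term alternating sum transparent.
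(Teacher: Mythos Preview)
Your proof is correct and follows essentially the same argument as the paper's: pick a non-adjacent pair $a,b \in A$, use Theorem~\ref{theo:discrete} to obtain $\Delta_a\Delta_b f \equiv 0$, and contradict this via the four-corner sign alternation forced by the (inherited) $\{a,b\}$-XOR. One minor slip: strict positivity of the classifier does not make $f$ nowhere zero (the log-ratio can vanish when the two class-conditional probabilities coincide), but all you actually need is $f\neq 0$ at the four witness corners, and that holds because $\phi=\sign(f)$ takes values $\pm 1$ there by Definition~\ref{def:xorcont}.
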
 
\begin{proof}
Consider a graph $\mathcal{G}$ and $A \subseteq [n]$ such that the induced sub-graph is not complete. 
Thus there exist  $i,\, j \in A$  non adjacent in $\mathcal{G}$. 
Thereafter, from Theorem~\ref{theo:discrete}, we have that for every $f \in \mathcal{F}_{\mathcal{G}}$,
$\Delta_i \Delta_j f \equiv 0$. Expanding the definition of second order difference (Equation~\eqref{eq:secondorder}) and using point \textit{(iii)} of Lemma~\ref{lem:x0x1}, we obtain
\begin{equation} \label{eq:expanded}
 f(\bx) + f(x'_i, x'_j,\bx_{-\{i,j\}}) = f(x'_i,\bx_{-i}) + f( x'_j, \bx_{-j}) \quad \text{for all } \bx \in \bchi,\, x'_i \in \nchi_i,\, x'_j \in \nchi_j.
 \end{equation}

Suppose now that $\phi= sign(f)$ contains an $\{i,j\}$-XOR. From Equation~\eqref{eq:xoreq} we have that, for some $\dot{x}_i, \ddot{x}_i \in \nchi_i, \, \dot{x}_j, \ddot{x}_j \in \nchi_j$  and  $\bx^{0}_{-\{i,j\}} \in \bchi_{-\{i,j\}}$,
$$
 -\phi(\dot{x}_i, \ddot{x}_j, \bx^0_{-\{i,j\}}) =\phi(\dot{x}_i, \dot{x}_j, \bx^0_{-\{i,j\}}) = - \phi(\ddot{x}_i, \dot{x}_j, \bx^0_{-\{i,j\}}) = \phi(\ddot{x}_i, \ddot{x}_j, \bx^0_{-\{i,j\}}).
 $$
This is absurd since Equation~\eqref{eq:expanded} holds.

We have proven that $\phi$ does not contain an XOR among $(x_i,x_j)$, thus much less it can contain an XOR among $\bx_A$. 
\end{proof}

Similarly to  \cite{varando15}, we can bound the number of decision functions
representable by a $\mathcal{G}$-Markov classifier for a fixed graph
$\mathcal{G}$. Let $sign(\mathcal{F}_{\mathcal{G}}) = \{ sign(f) \text{ s.t. } f
\in \mathcal{F}_{\mathcal{G}} \}$ be the set of decision functions representable
by a $\mathcal{G}$-Markov classifier, the following Corollary~\ref{cor:bound}  follows immediately from Theorem~\ref{theo:discrete} and the same argument as in~\cite{varando15}.

\begin{corollary} \label{cor:bound}
Let $\mathcal{G}$ be an undirected graph with cliques $\mathcal{C}(\mathcal{G})$ then
$$ |sign(\mathcal{F}_{\mathcal{G}})| \leq 2\sum_{k=0}^{d-1} \binom{|\bchi|-1}{k} ,$$
where $d= dim(\mathcal{F}_{\mathcal{G}}) $.
\end{corollary}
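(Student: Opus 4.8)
The plan is to recognize $|\sign(\mathcal{F}_{\mathcal{G}})|$ as a bound on the number of chambers of a central hyperplane arrangement in $\mathbb{R}^{d}$ and then to invoke the classical function-counting bound (Cover/Schl\"afli). First I would record that $\mathcal{F}_{\mathcal{G}}$ is a linear subspace of $\mathcal{F}$: by Theorem~\ref{theo:discrete} the defining condition $\Delta^{2}_{\mathcal{G}}f\equiv 0$ is a finite system of homogeneous linear equations in the values $\{f(\bx)\}_{\bx\in\bchi}$, since each constraint $\Delta_{A}\Delta_{B}f(\bx)=0$ is linear in $f$ by Lemma~\ref{lem:first}. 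Hence its solution set is a linear space, of dimension $d=dim(\mathcal{F}_{\mathcal{G}})$ by definition. Fixing a basis $f_{1},\dots,f_{d}$ gives a linear isomorphism $\mathbb{R}^{d}\to\mathcal{F}_{\mathcal{G}}$, $\mathbf{a}\mapsto f_{\mathbf{a}}=\sum_{j}a_{j}f_{j}$.

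Next, write $N=|\bchi|$. For each point $\bx\in\bchi$ the evaluation $\mathbf{a}\mapsto f_{\mathbf{a}}(\bx)=\langle \mathbf{a},\mathbf{v}_{\bx}\rangle$ is a linear functional, where $\mathbf{v}_{\bx}=(f_{1}(\bx),\dots,f_{d}(\bx))\in\mathbb{R}^{d}$. Thus $\sign(f_{\mathbf{a}}(\bx))$ depends only on which side of the central hyperplane $H_{\bx}=\{\mathbf{a}:\langle\mathbf{a},\mathbf{v}_{\bx}\rangle=0\}$ the vector $\mathbf{a}$ lies on, so the whole decision function $\sign(f_{\mathbf{a}})$ is constant on each open chamber of the central arrangement $\mathcal{A}=\{H_{\bx}\}_{\bx\in\bchi}$ of at most $N$ hyperplanes in $\mathbb{R}^{d}$. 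Two parameters in the same chamber induce the same classifier, so the number of distinct $\{-1,+1\}$-valued decision functions is at most the number of chambers of $\mathcal{A}$. The standard bound on the number of chambers of a central arrangement of $N$ hyperplanes in $\mathbb{R}^{d}$ is $2\sum_{k=0}^{d-1}\binom{N-1}{k}$, attained in general position; substituting $N=|\bchi|$ yields the claim. This is exactly the counting step of~\cite{varando15}, now applied to the subspace $\mathcal{F}_{\mathcal{G}}$.

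The step I expect to require the most care is the treatment of parameters $\mathbf{a}$ with $f_{\mathbf{a}}(\bx)=0$ for some $\bx$, i.e.\ points lying on one or more hyperplanes of $\mathcal{A}$: there $\sign$ takes the value $0$ and the induced map is not literally $\{-1,+1\}$-valued, whereas the chamber count only sees the open regions. The clean fix is to adopt a fixed tie-breaking convention, say $\sign(0)=+1$; then any boundary pattern agrees with the pattern of an adjacent open chamber, because perturbing $\mathbf{a}$ off the hyperplanes it meets (onto the positive side of each) leaves every strictly-signed coordinate unchanged and turns each zero coordinate into the chosen $+1$. By induction on the number of hyperplanes through $\mathbf{a}$, every realizable decision function already occurs on some open chamber, so the boundary contributes no new classifiers and the chamber bound continues to dominate $|\sign(\mathcal{F}_{\mathcal{G}})|$.
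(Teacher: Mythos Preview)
Your proposal is correct and is precisely the argument the paper has in mind: the paper does not spell out a proof but simply states that the corollary ``follows immediately from Theorem~\ref{theo:discrete} and the same argument as in~\cite{varando15},'' and that argument is exactly the Cover/Schl\"afli chamber-counting bound for a central arrangement of $|\bchi|$ hyperplanes in the $d$-dimensional parameter space $\mathcal{F}_{\mathcal{G}}$. Your extra care with the tie-breaking on the boundary is a welcome clarification that the cited references treat only implicitly.
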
 

Moreover, for decomposable graphs~\citep{lauritzen1996}, the dimension of $\mathcal{F}_{\mathcal{G}}$ can be iteratively computed  as $dim(\mathcal{F}_{\mathcal{G}}) = dim(\mathcal{F}_{\mathcal{G}_{A\cup D}}) + dim(\mathcal{F}_{\mathcal{G}_{B \cup D}}) - |\bchi_{D}|$ where $(A,B,D)$ is a decomposition of $\mathcal{G}$. This fact follows directly observing that, if $(A,B,D)$ is a decomposition of $\mathcal{G}$, $\mathcal{C}(\mathcal{G}) = \mathcal{C}(\mathcal{G}_{A\cup D}) \cup \mathcal{C}(\mathcal{G}_{B\cup D})$ and thus, $\mathcal{F}_{\mathcal{G}} = \mathcal{F}_{\mathcal{G}_{A\cup D}} + \mathcal{F}_{\mathcal{G}_{B\cup D}} $ .

When the graph is not decomposable it is still possible to compute the dimension of $\mathcal{F}_{\mathcal{G}}$ using 
that it is a sum of the spaces $\mathcal{F}_A$ for $A \in \mathcal{C}(\mathcal{G})$ (see Example~\ref{exp:2}).

Alternatively, \cite{LI2018} developed an algebraic way to compute the dimension of
$\mathcal{F}_{\mathcal{G}}$ for Markov networks.
As an application, they bounded the generalization error using the
Vapnik-Chervonenkis theory~\citep{vapnik}. 
\begin{example}\label{exp:2}
\textrm 
Consider the $\mathcal{G}$-Markov classifier with structure as in
    Figure~\ref{fig:markovclass}, and let assume that $|\nchi_i|=2$ for $i \in
    [4]$, that is, all the predictor variables are binary.

The cliques of the graph $\mathcal{G}$ are  $\mathcal{C}(\mathcal{G})=\{ \{1,2\}, \{2,3\}, \{3,4\}, \{4,1\} \}$. Thus, by Theorem~\ref{theo:discrete}, $\mathcal{F}_{\mathcal{G}} = \mathcal{F}_{\{1,2\}}+ \mathcal{F}_{\{2,3\}} + \mathcal{F}_{\{3,4\}} + \mathcal{F}_{\{4,1\}}$.
To compute the dimension of $\mathcal{F}_{\mathcal{G}}$, observe that
\begin{align}
    dim(\mathcal{F}_{\mathcal{G}}) =& dim\left((\mathcal{F}_{\{1,2\}}+ \mathcal{F}_{\{2,3\}} + \mathcal{F}_{\{3,4\}}) + \mathcal{F}_{\{4,1\}}\right) \label{eq:exp1}\\
    =& dim(\mathcal{F}_{\{1,2\}}+ \mathcal{F}_{\{2,3\}} + \mathcal{F}_{\{3,4\}}) + dim(\mathcal{F}_{\{4,1\}}) - \nonumber\\ 
  & dim( (\mathcal{F}_{\{1,2\}}+ \mathcal{F}_{\{2,3\}} + \mathcal{F}_{\{3,4\}}) \cap \mathcal{F}_{\{4,1\}}) \nonumber
\end{align} 
Observe that since $\mathcal{F}_{\{4,1\}}$ are functions that depends only on $X_1$ and $X_4$,
$$ (\mathcal{F}_{\{1,2\}}+ \mathcal{F}_{\{2,3\}} + \mathcal{F}_{\{3,4\}}) \cap \mathcal{F}_{\{4,1\}}= \mathcal{F}_{\{1\}} + \mathcal{F}_{\{4\}} .$$
Moreover, since $\mathcal{F}_{\{1\}} \cap \mathcal{F}_{\{4\}} = \mathcal{F}_{\emptyset}$, 
$$dim(\mathcal{F}_{\{1\}} + \mathcal{F}_{\{4\}})=dim(\mathcal{F}_{\{1\}}) + dim(\mathcal{F}_{\{4\}}) - dim(\mathcal{F}_{\{1\}} \cap \mathcal{F}_{\{4\}})= 3.$$
Using now the fact that $dim(\mathcal{F}_{A})= |\bchi_A| = 4 $ for every $A \in \mathcal{C}(\mathcal{G})$ we obtain, from Equation~\eqref{eq:exp1},
$$dim(\mathcal{F}_{\mathcal{G}})= dim\left(\mathcal{F}_{\{1,2\}}+ \mathcal{F}_{\{2,3\}} + \mathcal{F}_{\{3,4\}}\right) + 1.$$
With similar arguments we obtain,
$$ dim\left(\mathcal{F}_{\{1,2\}}+ \mathcal{F}_{\{2,3\}} + \mathcal{F}_{\{3,4\}}\right)= dim\left(\mathcal{F}_{\{1,2\}}+ \mathcal{F}_{\{2,3\}}\right) + 2 ,$$
and
$$dim\left(\mathcal{F}_{\{1,2\}}+ \mathcal{F}_{\{2,3\}}\right)= dim(\mathcal{F}_{\{1,2\}})+  dim(\mathcal{F}_{\{2,3\}})-dim(\mathcal{F}_{2})= 6$$
Thereafter, finally,
$$ dim(\mathcal{F}_{\mathcal{G}})= 6+2+1=9.$$ 
Thus from Corollary~\ref{cor:bound} we obtain the following upper bound on the
    number of decision functions representable by $\mathcal{G}$-Markov
    classifiers with structure as in Figure~\ref{fig:markovclass} and binary
    predictors:
$$ |sign(\mathcal{F}_{\mathcal{G}})| \leq 45638,$$
    this corresponds to approximately $70\%$ of the total $2^{16}$ decision functions over $\bchi$.
\end{example}

\subsection{Application to Bayesian Network Classifiers} 

Bayesian network classifiers~\citep{bielza2014} are one of the most common examples of generative classifier. 
In a Bayesian network classifier the joint probability distribution  of the predictor variables $\bX$ and the class $C$ is represented by a Bayesian network.
A Bayesian network \citep{Pearl1988} is composed by a directed acyclic graph
(DAG) with nodes indexed as the random variables and by a probability distribution factorized as
$$ \Pp(\mathbf{X})= \prod_{i=1}^{n} \Pp\left(X_i|\bX_{\operatorname{pa}(i)}\right),$$
where $\operatorname{pa}(i) \subset [n]$ denotes the parents of $i$ in the DAG
$\mathcal{G}$.
In particular Bayesian augmented naive Bayes (BAN) classifiers are Bayesian network classifier where the structure is composed of a DAG among the predictor variables and  
the class variable $C$ is added as a father of all the predictors.

A Bayesian network without $V$-structures, that is, where every set of parents is completely connected, is equivalent to an undirected Markov model with the same structure (dropping the directions of the arcs)~\citep{lauritzen1996}. Thus, Theorem~\ref{theo:discrete} applies directly to every BAN classifier without $V$-structures.
Moreover if the BAN classifier has some $V$-structures, we can still apply Theorem~\ref{theo:discrete} to the corresponding moral graph.
In particular, if $\Pp$ is the probability distribution of a BAN classifier, we
obtain that the induced discrimination function factorizes as follows,
$$ f_{\Pp}(\bx) = \sum_{i\in [n]} g_{i}(\bx_{\operatorname{pa}(i)},x_i), $$
where every function $g_i$ depends on the variable $X_i$ and the
$\bX_{\operatorname{pa}(i)}$.

Those results were obtained directly by \cite{varando15} for BAN classifiers.
Similarly, Corollary~\ref{cor:xorG1} can be applied to Bayesian network classifiers, considering the corresponding undirected moral graph obtaining in this case the results of~\cite{ling2003}.

\section{Parameter Estimation}
\label{sec:learning}

The method generally used to fit the parameters of a generative classifier is
maximum-likelihood estimation. But if the model is
misspecified, that is, the true distribution does not satisfy the same conditional independences of the model, the maximum-likelihood estimation is not optimal \citep{ptpr}. This is true, as shown by \cite{Domingos97} for naive Bayes, also if the true discrimination function belongs to $\mathcal{F}_{\mathcal{G}}$. 

Discriminative learning algorithms, that is, procedures that search directly a space $\mathcal{F}' \subset \mathcal{F}$  for the \textit{best} (for example with respect to empirical error minimization or conditional likelihood) discrimination function given a dataset, do not suffer from the same problem of generative learning algorithms~\citep{Ng2001,ptpr}.     

In this section we show how  to find the maximum-likelihood $\mathcal{G}$-Markov classifier that induces a given decision function $f \in \mathcal{F}_{\mathcal{G}}$.
We thus suggest a way of joining the generative and discriminative approaches.

\subsection{Fixed Discrimination Maximum-Likelihood Estimator} \label{sec:mle}
Fix an undirected graph $\mathcal{G}$ and a discrimination function $f \in \mathcal{F}_{\mathcal{G}}$. 
From Theorem~\ref{theo:discrete}, there exists a $\mathcal{G}$-Markov
classifier, $\Pp$, that induces $f$ (that is $\Phi(\Pp)=f_{\Pp}=f$). 
Actually from the proof of Theorem~\ref{theo:discrete} it follows that there
exists a whole family of $\mathcal{G}$-Markov classifiers that induce $f$.  

We are interested now in obtaining the generative classifier that maximizes the likelihood over a dataset $\mathcal{D}$ among such family. 
From the proof of Theorem~\ref{theo:discrete} we have that the density $p$ of
$\Pp \in \mathcal{P}_{\mathcal{G}}(f) =  \mathcal{P}_{\mathcal{G}} \cap
\Phi^{-1}(f)$ can be parametrized as described in Equation\eqref{eq:param1}: 
$$  p(\bx,c) = \exp\left(K + g(\bx) + \frac{c}{2}f(\bx) \right) \text{ for  } g \in \mathcal{F}_{\mathcal{G}}. $$

Similarly to the maximum-likelihood estimation in Markov models, we need to complete the set $\mathcal{P}_{\mathcal{G}}(f)$ with the limiting distributions.

\begin{definition}
We say that $\Pp$ is a marginally extended $\mathcal{G}$-Markov classifier if
    there exists $\Pp_{n} \in \mathcal{P}_{\mathcal{G}}(f)=\mathcal{P}_{\mathcal{G}} \cap \Phi^{-1}(f)$ such that
 $$  \Pp(\bX=\bx, C = c) = \lim _{n \to \infty} \Pp_{n}(\bX=\bx, C = c) \quad \text{ for all } \bx \in \bchi \text{ and } c \in \{-1,+1\}.$$ 
\end{definition} 

With $\overline{\mathcal{P}}_{\mathcal{G}}(f)$ we denote the family of
marginally extended Markov classifiers with discrimination function $f$ and consider the maximum-likelihood problem over the set of marginally
extended Markov classifiers: 
\begin{equation} \label{eq:mle}
\operatorname{argmax}_{\Pp\in \overline{\mathcal{P}}_{\mathcal{G}}(f)} \mathcal{L}(\Pp;\mathcal{D}) = \operatorname{argmax}_{\Pp\in \overline{\mathcal{P}}_{\mathcal{G}}(f)}\prod_{(\bx,c) \in \mathcal{D}} \Pp(\bX=\bx, C=c).
\end{equation}

The iterative proportional fitting (IPF) algorithm \citep{fienberg70II,lauritzen1996} can be used to solve Equation~\eqref{eq:mle}. 
Let $\mathcal{C}({\mathcal{G}})$ be the set of cliques of the graph $\mathcal{G}$ and $\Pp \in \mathcal{P}_{\mathcal{G}}\cap \Phi^{-1}(f)$. For $A \in \mathcal{C}({\mathcal{G}})$ define the marginal fitting operator:
$$ T_{A}\Pp(\bX=\bx, C=c) = \Pp(\bX=\bx, C=c) \frac{N(\bx_A)/N}{\Pp(\bX_A=\bx_A)},$$
where $N$ is the size of the dataset $\mathcal{D}$ and $N(\bx_A)$ is, for every $\bx_A \in \bchi_A$, the number of observations in the dataset $(\bx',c) \in \mathcal{D}$ such that $\bx'_A = \bx_A$.
Observe that,
\begin{equation} \label{eq:ipfFun}
 f_{T_A \Pp}= \ln\left(\frac{T_A\Pp(\bX=\bx,C=+1)}{T_A\Pp(\bX=\bx,C=-1)}\right)=
\ln\left(\frac{\Pp(\bX=\bx,C=+1)}{\Pp(\bX=\bx,C=-1)}\right)=f_{\Pp}.
\end{equation}
Thus $T_A\Pp \in \mathcal{P}_{\mathcal{G}}\cap \Phi^{-1}(f)$.
Given an ordering of the cliques $\mathcal{C}({\mathcal{G}})$, the IPF algorithm iteratively adjusts the marginal distribution of the cliques until convergence \citep{fienberg70II}. 
 
If we initialize the IPF algorithm with a probability in $\mathcal{P}_{\mathcal{G}}\cap \Phi^{-1}(f)$, for example
$$ \Pp^{0}(\bX=\bx,C=c) \varpropto \exp{\left(\frac{c}{2}f(\bx)\right)}, $$ 
then, by Equation~\eqref{eq:ipfFun}, the resulting maximum-likelihood estimation obtained with the IPF algorithm will be an element of $\overline{\mathcal{P}}_{\mathcal{G}}(f)$.

\subsection{Combining the Discriminative and Generative Approaches}

It is obvious now how to combine the discriminative and generative approaches.
Suppose to have a dataset $\mathcal{D}$ and a discriminative learning algorithm
that outputs an estimated function $\hat{f}_{\mathcal{D}} \in \mathcal{F}$.
Moreover, assume that, for a given graph $\mathcal{G}$,
$\Delta^{2}_{\mathcal{G}} \hat{f}_{\mathcal{D}} \equiv 0$. We can find, using
the IPF algorithm as described in Section~\ref{sec:mle}, the maximum likelihood $\mathcal{G}$-Markov classifier that induces $\hat{f}_{\mathcal{D}}$.

For various discriminative algorithms it is possible to know a priori the graph $\mathcal{G}$ such that $\Delta^{2}_{\mathcal{G}} \hat{f}_{\mathcal{D}} \equiv 0$. That is because a decomposition of $\hat{f}_{\mathcal{D}}$ is known as
$$ \hat{f}_{\mathcal{D}}(\bx) = \sum_{A} g_{A}(\bx_A) .$$

Examples include logistic regression~\citep{Cox58}, support vector
machines~\citep{Cortes1995} and Boolean classifiers (e.g using monomials basis)~\citep{odonnell2014}. 
Recently \cite{Zaidi2017} proposed a method to learn in a discriminative way
parameters of Bayesian network classifiers thus obtaining an estimated
discrimination function $\hat{f} \in \mathcal{F}_{\mathcal{G}}$ from which the
fixed discrimination maximum-likelihood generative classifier in
$\overline{\mathcal{P}}_{\mathcal{G}}(\hat{f})$ could be obtained with the method described in the previous section.

\section{Conclusions and Future Work}
\label{sec:conclusions}

In this paper we analyzed the impact of conditional independence statements and
in general of the undirected Markov property over the induced discrimination
function of a generative classifier. For this, we used a categorical
differential operator ($\Delta_A$) and we showed that conditional
independence statements are described by second-order equations in this operator
($\Delta_A\Delta_Bf \equiv 0$).  We then studied the implications for the
corresponding decision functions, that is, the binary classifier induced by
$\mathcal{G}$-Markov classifiers. Summarizing, we showed that a
$\mathcal{G}$-Markov classifier can represent XOR functions only among its
cliques.

We think that the given descriptions of conditional independence statements for
discrimination functions could be useful to study generative classifiers over
categorical predictors and to help design new type of learning procedures.  In
particular for future works we are interested in empirical error minimization
parameter estimation for $\mathcal{G}$-Markov classifiers and thus implementing
the mixed discriminative-generative approach described in
Section~\ref{sec:learning}.

\section*{Acknowledgments}

We thank Professor Henry Wynn for suggesting the use of finite differences.    

E. Riccomagno has been partially supported by the GNAMPA-INdAM 2017 project.
G. Varando, C. Bielza and P. Larrañaga by the Spanish Ministry of Economy and
Competitiveness through the Cajal Blue Brain (C080020-09; the Spanish partner of
the Blue Brain initiative from EPFL) and TIN2016-79684-P projects, by the
Regional Government of Madrid through the S2013/ICE-2845-CASI-CAM-CM project,
and by Fundación BBVA grants to Scientific Research Teams in Big Data 2016. 
G. Varando  by the Italian Association for Artificial Intelligence through
the Incoming Mobility Grant and by the Universidad Polit\'ecnica de Madrid through
the Programa Propio de I+D+I 2017.

\bibliography{biblio}

\end{document}